\def\BibTeX{{\rm B\kern-.05em{\sc i\kern-.025em b}\kern-.08em
    T\kern-.1667em\lower.7ex\hbox{E}\kern-.125emX}}
\newtheorem{theorem}{Theorem}
\newtheorem{definition}{Definition}%
\begin{document}
%Rendezvous to Optimize Vehicle-to-Vehicle Charging
\title{Vehicle-to-Vehicle Charging: Model, Complexity, and Heuristics\\
\thanks{Funded by Fundação para a Ciência e a Tecnologia (FCT), Portugal, through the CMU Portugal Program under grant 10.54499/PRT/BD/152193/2021 with DOI~\url{https://doi.org/10.54499/PRT/BD/152193/2021}; and partially supported by Instituto de Telecomunicações, Portugal, and FCT, under grants 2022.06780.PTDC (DOI~\url{https://doi.org/10.54499/2022.06780.PTDC}) and 10.54499/UIDB/50008/2020 (DOI~\url{https://doi.org/10.54499/UIDB/50008/2020}).}
}

\author{\IEEEauthorblockN{Cl\'audio Gomes}
\IEEEauthorblockA{\textit{Carnegie Mellon University}, USA\\
\textit{FEUP}, \textit{University of Porto}, Portugal\\
\textit{LIACC}, \textit{University of Porto}, Portugal\\
claudiogomes@cmu.edu}
\and
\IEEEauthorblockN{Jo\~ao Paulo Fernandes}
\IEEEauthorblockA{\textit{New York University Abu Dhabi} \\
\textit{United Arab Emirates}\\
jpf9731@nyu.edu
}
\and
\IEEEauthorblockN{Gabriel Falcao}
\IEEEauthorblockA{\textit{Instituto de Telecomunicações} \\
\textit{Dept. of Electrical and Computer Engineering}\\
University of Coimbra, Portugal \\
gabriel.falcao@uc.pt}
\and
\IEEEauthorblockN{Soummya Kar}
\IEEEauthorblockA{\textit{Dept. of Electrical and Computer Engineering} \\
\textit{Carnegie Mellon University, USA}\\
soummyak@andrew.cmu.edu}
\and
\IEEEauthorblockN{Sridhar Tayur}
\IEEEauthorblockA{\textit{Tepper School of Business} \\
\textit{Carnegie Mellon University, USA}\\
stayur@cmu.edu}
}

\maketitle

\begin{abstract}
The rapid adoption of Electric Vehicles (EVs) poses challenges for electricity grids to accommodate or mitigate peak demand. Vehicle-to-Vehicle Charging (V2VC) has been recently adopted by popular EVs, posing new opportunities and challenges to the management and operation of EVs. We present a novel V2VC model that allows decision-makers to take V2VC into account when optimizing their EV operations. We show that optimizing V2VC is NP-Complete and find that even small problem instances are computationally challenging. We propose R-V2VC, a heuristic that takes advantage of the resulting totally unimodular constraint matrix to efficiently solve problems of realistic sizes. Our results demonstrate that R-V2VC presents a linear growth in the solution time as the problem size increases, while achieving solutions of optimal or near-optimal quality. R\nobreakdash-V2VC can be used for real-world operations and to study what-if scenarios when evaluating the costs and  benefits of V2VC.  %Our study allows for a future cost-benefit analysis of V2VC in real-world areas and EV fleets.
\end{abstract}

\begin{IEEEkeywords}
vehicle-to-vehicle charging, V2V, vehicle routing, rendezvous, scheduling, IP, integer programming, optimization
\end{IEEEkeywords}

\section{Introduction}

Electricity grids are undergoing rapid transformation as the adoption of Electric Vehicles (EVs) increases~\cite{IEAglobalevoutlook,RIETMANN2020121038}. This transformation usually involves building new infrastructure, which can be costly and slow. Other complementary approaches include shaping demand through dynamic pricing and using alternative energy distribution methods~\cite{10333947,10333900,10333930}. One of such promising technologies is Vehicle-to-Vehicle Charging (V2VC), which is a feature already present in some EVs, such as Ford F-150 Lightning and Tesla Cybertruck~\cite{ford}.

In regions lacking adequate infrastructure, V2VC technology has the potential to assist EV owners, whether they are individuals or businesses, reach their destinations with less time and energy expenditure. As a concrete example, imagine the case of EV owners who would have to take a time- and energy-consuming detour so that they could have their EV sufficiently charged to reach the next stop. Instead, V2VC could allow these EV owners to be charged somewhere along the path by another EV, avoiding long detours, saving time and energy, while reducing peak demand on the electricity grid~\cite{ford}.

We develop a novel model that incorporates V2VC in vehicle routing problems (VRPs). The inputs are a network map with the positions of EVs and points where EVs can meet to perform V2VC and the Grid-to-Vehicle Charging (G2VC) locations. Our model can be adapted to many VRP variations, such as setting delivery routes and ride-sharing, by adapting/adding constraints. Initially, we prove the NP-Completeness of the problem. Subsequently, we introduce Restricted-V2VC (R-V2VC), a heuristic that can approximately solve the model in polynomial time with respect to the problem size. To the best of our knowledge, despite the fact that previous work addresses V2VC engineering technology with respect to implementation details and control algorithms~\cite{8913525,8710609,8328689,9526757}, there has been no work modeling V2VC as an optimization problem in the context of VRPs for EV fleets in a region. Our contributions are summarized below.

\begin{itemize}
    \item We designed a nonlinear objective integer constrained model for Vehicle-to-Vehicle Charging in VRPs;
    \item We proved the NP-completeness of the V2VC model;
    \item We developed a heuristic (R-V2VC) that uses a bipartite graph for a simpler representation of the V2VC model;
    \item We benchmarked R-V2VC, showing that it solves increasingly large V2VC problems with a sublinear growth in solution time while finding near-optimal solutions.
\end{itemize}

The remainder of the paper is structured as follows. In Section~\ref{sec3}, we describe the V2VC model, the proof of its NP-completeness, and develop the R-V2VC heuristic. In Section~\ref{sec:results}, we present results on the performance and limitations of the original formulation and R-V2VC. In Section~\ref{sec:conclusion}, we conclude and discuss future work.

\section{Methodology}\label{sec3}

%Our algorithmic work is motivated by an emerging application that considers the Vehicle-to-Vehicle Charging (V2VC) technology in current and upcoming Electric Vehicles (EVs). 
We utilize a nonlinear objective integer constrained optimization (NOICO) framework to represent our model: %The presented model is followed by a proof of its NP-Completeness, motivating the usage of heuristics to find feasible solutions that are good enough for decision-makers.
\begin{equation}
    \label{eq:IP}
    \left(\mathrm{IP}\right)_{A,\boldsymbol{b},\boldsymbol{l},\boldsymbol{u},\mathcal{F}}: \begin{cases}
        \min \mathcal{F}\left(\boldsymbol{x}\right)\\
        A\boldsymbol{x}=\boldsymbol{b},~\boldsymbol{l} \leq \boldsymbol{x} \leq \boldsymbol{u},~\boldsymbol{l},\boldsymbol{x},\boldsymbol{u} \in \mathbb{Z}^n\\
        A \in \mathbb{Z}^{m \times n},~\boldsymbol{b} \in \mathbb{Z}^m
    \end{cases},
\end{equation}
\noindent where $\mathcal{F}\left(\boldsymbol{x}\right)$ can be a nonlinear function.

\subsection{V2VC Model Definition}

\subsubsection{Assumptions}

%The development of the model followed a series of assumptions that were introduced to be able to represent the V2VC problem in a practical way within the structure of an $\left(\mathrm{IP}\right)_{A,\boldsymbol{b},\boldsymbol{l},\boldsymbol{u},f}$. 
We assume the following: i)~time evolution is discretized by equal time steps, that ranges from $0$ (start) to $T$ (end of horizon); ii)~we have a road network map $G$ with meeting points, parking stations, other locations, and road connections; iii)~we control a fleet of EVs capable of V2VC and G2VC; iv)~each EV $i$ is located at an initial location $s_i$ at $t=0$; v)~each EV needs to move to their own destination and should arrive there before time $T$; vi)~the energy transfer of V2VC is discrete, and only a fixed amount of energy can be transferred at each time step; vii)~EVs can only perform V2VC at meeting points; viii)~EVs can only perform G2VC at parking stations; and ix) all routing and charging decisions are made at the beginning of the horizon.

\subsubsection{Parameters}

The model is parameterized as in Table~\ref{tab:parameters}.

\begin{table}
    \centering
    \caption{Parameters in the V2VC model and their description.}
    \begin{tabular}{c|p{6.5cm}}
        \toprule
        Parameter & Description \\
        \midrule
        $V$ & Set of EVs.\\
        $P$ & Set of parking stations.\\
        $M$ & Set of meeting points.\\
        $G$ & Graph with a set of nodes that represent points in the map and a set of arcs that represent road connections. The arcs can be directed or undirected edges. The set of nodes includes $P$, $M$, and other locations (note: $P \cap M = \emptyset$).\\
        $e_p$ & Energy that the parking station $p$ provides to an EV in a single time step during G2VC.\\
        $e_i$ & Energy that the EV $i$ provides to another EV in a single time step during V2VC.\\
        $e_a$ & Energy that traversing arc $a$ of $G$ takes.\\
        $d_a$ & Number of time steps that it takes to traverse the arc $a$.\\
        $\mathrm{SOC}_i$ & Starting energy level of the battery of the EV $i$.\\
        $\mathrm{MAXSOC}_i$ & Maximum energy level of the EV $i$.\\
        $s_i$ & Starting position of the EV $i$, which is a node of $G$.\\
        $f_i$ & Destination of the EV $i$, which is a node of $G$.\\
        $T$ & Number of time steps considered. The time windows of the problem goes from $t=0$ to $t=T-1$.\\
        \bottomrule
    \end{tabular}
    \label{tab:parameters}
\end{table}

\subsubsection{Time-Space Network Graph}

We define the set of nodes $N$ as tuples $(v,t)$, where $v$ is a node from $G$ and $t$ is a time step. We also define $A$ as the set of arcs that connect nodes in $N$. The model uses a modified graph, $G_{\mathrm{TS}} = (N, A)$, called \textit{time-space network graph} (in literature, it is also called \textit{time-expanded network graph}~\cite{timeexpandednetwork}). This graph combines $G$ and $\boldsymbol{d}$, from Table~\ref{tab:parameters}, with discrete time steps going from $t=0$ to $t=T-1$. For EV routing purposes in $G_{\mathrm{TS}}$, each EV cannot choose arcs that share a time step. Moreover, each EV needs to select a set of connected arcs such that they have a valid path from $t=0$ to $t=T-1$. In the end, the selected arcs show the path for all EVs through space and time. For each time step, an EV that is at a node may: \textbf{stay at the current node}---it will be at the same node in the next time step---$(v,t)\rightarrow(v,t+1)$; or \textbf{traverse an arc $\boldsymbol{a}$ to a different node}---it will be at a different node after $d_a$ time steps---$(v,t)\rightarrow(w,t+d_a)$. Hence:

\noindent $\bullet$ $e^\mathrm{TS}_a$---energy that traversing arc $a=(v,t)\rightarrow(w,t+d_a)$ of $G_{\mathrm{TS}}$ takes, such that it is equal to $e_{(v,t)} \times d_a$.

\noindent Note that $G_{\mathrm{TS}}$ is a directional graph going from $t=0$ to $t=T-1$, which implies that an EV cannot go back in time.

\subsubsection{Binary Decision Variables}

The model involves three different types of binary decision variables, $X$ for EV routing, $Y$ for the G2VC schedule, and $Z$ for the V2VC schedule. These variables are described in Table~\ref{tab:v2vvariables}.

\begin{table}
    \centering
    \caption{Variables in the V2VC Model, their description, and their \#.}
    \begin{tabular}{c|p{4.7cm}|c}
        \toprule
        Variables & Description & No. of Variables \\
        \midrule
        $X_{i,a}$ & Assigns EV $i$ to traverse arc~$a$ of $G_{\mathrm{TS}}$. & $\lvert V \rvert \cdot \lvert A \rvert$\\
        \midrule
        $Y_{i,a}$ & \multirow{2}{4.7cm}{For $a$ of the form $(p,t)\rightarrow(p,t+1)$, where $p$ is a parking station, assigns EV $i$ to be charged from the grid at a parking station $p$ between $t$ and $t+1$.} & $\lvert V \rvert \cdot \lvert P \rvert$\\
        & & $\cdot~(T-1)$\\
        & &\\
        & &\\
        \midrule
        $Z_{i,j,a}$ & \multirow{2}{4.7cm}{For $a$ of the form $(m,t)\rightarrow(m,t+1)$, where $m$ is a meeting point, assigns EV $i$ to be charged from EV $j$ on the meeting point $m$ between $t$ and $t+1$.} & $\lvert V \rvert \cdot (\lvert V \rvert - 1)$\\
        & & $\cdot~\lvert M \rvert \cdot (T-1)$\\
        & &\\
        & &\\
        \bottomrule
    \end{tabular}
    \label{tab:v2vvariables}
\end{table}

\subsubsection{Linear Constraints}\label{subsec:v2vconstraints}

\paragraph{Path Constraints} There are $\lvert V \rvert \cdot \lvert N \rvert$ constraints related to routing. The path constraints ensure each vehicle $i$ is assigned a feasible route from $s_i$ to $f_i$.
\begin{align}
    % \begin{aligned}
    %     \sum_{a \in \mathrm{in}(n)} X_{i,a} - \sum_{a \in \mathrm{out}(n)} X_{i,a} = 0,\\
    % \forall n \in N \setminus \left\{(s_i, 0), (f_i,T-1)\right\},~\forall i \in V;
    % \end{aligned}\\
    % \begin{aligned}
    %     - \sum_{a \in \mathrm{out}((s_i, 0))} X_{i,a} = -1,\\ \forall i \in V;
    % \end{aligned}\\
    % \begin{aligned}
    %     \sum_{a \in \mathrm{in}((f_i, T-1))} X_{i,a} = 1,\\ \forall i \in V;
    % \end{aligned}
    \begin{aligned}
        \sum_{a \in \mathrm{in}(n)} X_{i,a} - \sum_{a \in \mathrm{out}(n)} X_{i,a} = \begin{cases}
            0 &\mathrm{if~}  n \neq (s_i, 0)\\& \mathrm{or~} n \neq (f_i,T-1);\\
            -1 &\mathrm{if~}  n = (s_i, 0);\\
            1 & \mathrm{if~}  n = (f_i,T-1);
        \end{cases},\\
    n \in N,~\forall i \in V.
    \end{aligned}
\end{align}

\noindent where $\mathrm{in}(n)$ is the set of arcs entering $n$, and $\mathrm{out}(n)$ is the set of arcs leaving $n$.

\paragraph{Battery Constraints} There are $\lvert V \rvert \cdot (T-1)$ constraints that ensure the batteries of the EVs are between 0 and their maximum charge level. Each constraint includes a slack variable $z_{t,i}$.
\begin{align}
    \begin{aligned}
        \sum_{a\in \mathrm{until}(t)} \left(\vphantom{\sum^V_{j\neq i}} \left(Y_{i,a} - X_{i,a} \right) \times e^\mathrm{TS}_{a}+\sum^V_{j\neq i} \left( Z_{i,j,a} - Z_{j,i,a} \right) \times e_{j}\right)\\+ z_{t,i} = - \mathrm{SOC}_i,~\forall t \in \{ 1, \hdots, T-1 \},~\forall i \in V;
    \end{aligned}\\
    \begin{aligned}
        -\mathrm{MAXSOC}_i \leq z_{t,i} \leq 0,~\forall t \in \{ 1, \hdots, T-1 \},~\forall i \in V.
    \end{aligned}
\end{align}

\noindent where $\mathrm{until}(t)$ is the set of arcs that exist before or in time step $t$.

\paragraph{G2VC Constraints} There are $\lvert V \rvert \cdot \lvert P \rvert \cdot (T-1)$ constraints that ensure that EVs can only perform G2VC at parking stations when they are there. These constraints involve variables $X_{i,a}$ and $Y_{i,a}$. Each constraint includes a slack variable $z_{i,a}$.
\begin{align}
    \begin{aligned}
        Y_{i,((p,t),(p,t+1))} - X_{i,((p,t),(p,t+1))} + z_{i,a} = 0,\\\forall i \in V,~\forall p \in P,~\forall t \in \{ 0, \hdots, T-2 \};
    \end{aligned}\\
    \begin{aligned}
        0 \leq z_{i,a} \leq 1,~\forall i \in V,~\forall p \in P,~\forall t \in \{ 0, \hdots, T-2 \}.
    \end{aligned}
\end{align}

\paragraph{V2VC Constraints} There are $2 \cdot \lvert V \rvert \cdot (\lvert V \rvert - 1) \cdot \lvert M \rvert \cdot (T-1)$ constraints that ensure that EVs can only perform V2VC at meeting points when they are physically there. Moreover, there are another $\lvert V \rvert \cdot (\lvert V \rvert - 1) \cdot \lvert M \rvert \cdot (T-1) / 2$ constraints that ensure that charging is unidirectional (EV A can only charge EV B if EV B is not charging EV A). Each of these constraints creates an additional slack variable.
\begin{align}
    \begin{aligned}
        Z_{i,j,((m,t),(m,t+1))} - X_{i,((m,t),(m,t+1))} + z_{i,j,m,t,1} = 0,\\\forall i \in V,~\forall j \in V \setminus \{i\},~\forall m \in M,~\forall t \in \{ 0, \hdots, T-2 \};
    \end{aligned}\\
    \begin{aligned}
        Z_{i,j,((m,t),(m,t+1))} - X_{j,((m,t),(m,t+1))} + z_{i,j,m,t,2} = 0,\\\forall i \in V,~\forall j \in V \setminus \{i\},~\forall m \in M,~\forall t \in \{ 0, \hdots, T-2 \};
    \end{aligned}\\
% \end{align}
% \begin{align}
    \begin{aligned}
        Z_{i,j,((m,t),(m,t+1))} + Z_{j,i,((m,t),(m,t+1))} + z_{i,j,m,t,3} = 1\\\forall (i,j) \in (V \times V),~\forall m \in M,~\forall t \in \{ 0, \hdots, T-2 \};
    \end{aligned}\\
    \begin{aligned}
        0 \leq z_{i,j,m,t,1} \leq 1,\\\forall i \in V,~\forall j \in V \setminus \{i\},~\forall m \in M,~\forall t \in \{ 0, \hdots, T-2 \};
    \end{aligned}\\
    \begin{aligned}
        0 \leq z_{i,j,m,t,2} \leq 1,\\\forall i \in V,~\forall j \in V \setminus \{i\},~\forall m \in M,~\forall t \in \{ 0, \hdots, T-2 \};
    \end{aligned}\\
    \begin{aligned}
        -1 \leq z_{i,j,m,t,3} \leq 0,\\\forall (i,j) \in (V \times V),~\forall m \in M,~\forall t \in \{ 0, \hdots, T-2 \}.
    \end{aligned}
\end{align}

Additionally, there are $2 \cdot \lvert V \rvert \cdot (T-1)$ constraints that prevent an EV from charging more than one EV simultaneously and from being charged by more than one EV simultaneously:
\begin{align}
    \begin{aligned}
        z_{t,i,1} + \sum_{m \in M} \sum_{j \in V \setminus \{i\}} Z_{i,j,((m,t),(m,t+1))} = 0,\\\forall i \in V,~\forall t \in \{ 0, \hdots, T-2 \};
    \end{aligned}\\
    \begin{aligned}
        z_{t,i,2} + \sum_{m \in M} \sum_{j \in V \setminus \{i\}} Z_{j,i,((m,t),(m,t+1))} = 0,\\\forall i \in V,~\forall t \in \{ 0, \hdots, T-2 \};
    \end{aligned}\\
    \begin{aligned}
        -1 \leq z_{t,i,1} \leq 0,~\forall i \in V,~\forall t \in \{ 0, \hdots, T-2 \};
    \end{aligned}\\
    \begin{aligned}
        -1 \leq z_{t,i,2} \leq 0,~\forall i \in V,~\forall t \in \{ 0, \hdots, T-2 \}.
    \end{aligned}
\end{align}

\noindent with each introducing a slack variable.

\subsubsection{Dimensions}
\label{subsubsec:dimA}

In total, the dimensions of the matrix $A$ are $\left[\lvert V \rvert \cdot \lvert N \rvert + \lvert V \rvert \cdot (T-1) \cdot \left( 3 + \lvert P \rvert + \frac{5}{2} \cdot (\lvert V \rvert - 1) \cdot \lvert M \rvert \right)\right] \times \left[\lvert V \rvert \cdot \lvert A \rvert + \lvert V \rvert \cdot (T-1) \cdot \left( 3 + 2 \cdot \lvert P \rvert + \frac{7}{2} \cdot (\lvert V \rvert - 1) \cdot M \right)\right]$.

\subsubsection{Objective function}

The V2VC model accepts an objective function $\mathcal{F}\left(\boldsymbol{x}\right)$, as shown in~\eqref{eq:IP}, that is minimized under the set of constraints listed above. In practice, $\mathcal{F}\left(\boldsymbol{x}\right)$ can include linear terms such as the cost of the energy consumed at each time step, or the time it takes for EVs to reach their destination. It may also include nonlinear terms that, e.g, balance V2VC efforts among the EV fleet for fairness, or regulate demand for a specific meeting point. Ultimately, $\mathcal{F}\left(\boldsymbol{x}\right)$ depends on the objectives of the decision-maker.

\subsection{NP-Completeness Proof}

The general class of integer problems (IPs) as stated in~\eqref{eq:IP} is known to be NP-complete. However, the introduction of constraints of specific types can lead to specific subclasses that are not NP-complete. Therefore, the implications of the specific constraints in the V2VC model on its complexity need to be investigated. For these purposes, we consider \textit{V2VC problem} in which an answer to a particular instance of the V2VC model is \textit{True} if the constraints $Ax=b,~l \leq x \leq u$ shown in equation \eqref{eq:IP} can be satisfied, and \textit{False} otherwise.

A proof for the NP-completeness of this model is achieved by showing that the decision version of the V2VC problem belongs to the NP class and is NP-hard. Due to space constraints, we present only a synthesis of the NP-hardness proof. The reader can consult the complete proof by accessing \href{https://cfpgomes.github.io/V2VCNP.pdf}{https://cfpgomes.github.io/V2VCNP.pdf}.

\subsubsection{NP class}

For any given decision problem, a \textit{verifier} is an algorithm capable of verifying whether a solution $\tau$ given to an instance $p$ of the problem is correct. Formally, a verifier is a function $\mathcal{V}(p,\tau)$, which returns \textit{True} when $\tau$ is correct and \textit{False} otherwise~\cite{arora_computational_2009}. We define a \textit{witness} $x$ of a decision problem instance $p$ as a solution to $p$ such that $\mathcal{V}(p,x)=\mathit{True}$, where $\mathcal{V}$ is a verifier of $p$. That is, a witness is a correct solution to a problem instance, which, once verified, allows us to conclude that the problem instance is \textit{True}. The decision version of the V2VC problem is in NP if: there exists a polynomial-time verifier that checks that a witness attests to a particular instance of the problem being \textit{True}; and the dimension of the witness is polynomial in the size of the problem instance~\cite{arora_computational_2009}.

For the V2VC problem, we can define a polynomial-time verifier as the algorithm that checks whether the equality and inequality constraints $Ax=b,~l \leq x \leq u$ shown in equation~\eqref{eq:IP} are satisfied by a solution $x$. Concretely, this verifier of the V2VC problem can be defined as
\begin{equation}
    \mathcal{V}_{\text{V2VC}}\left((A,b,l,u),x\right) = \begin{cases}
        \mathit{True} & \text{if } Ax=b,\\&l \leq x \leq u \text{ are satisfied,}\\
        \mathit{False} & otherwise.
    \end{cases}
\end{equation}

\noindent where $(A,b,l,u)$ is a V2VC instance and $x$ is a solution to the instance. We note that this formulation uses the same notation as in equation \eqref{eq:IP} for simplicity. The matrix $A$ and the vectors $b,l,$ and $u$ are represented as a series of constraints (shown in Section~\ref{subsec:v2vconstraints}) and the solution $x$ is represented as the values for the binary decision variables $X$, $Y$, and $Z$, as well as the slack variables (shown in Section~\ref{tab:v2vvariables}). $\mathcal{V}_{\text{V2VC}}$ performs matrix multiplication and equality and inequality comparisons to check whether a given solution is correct for a given V2VC instance. The dimensions of the matrix have been shown in Section~\ref{subsubsec:dimA} to be polynomial in terms of the number of vehicles, parking stations, meeting points, and time steps. Furthermore, the length of the vectors compared in the equality and inequality comparisons is the same as the number of columns in A. Since these operations are known to be computable in polynomial time with respect to the dimensions of the matrix and vectors, $\mathcal{V}_{\text{V2VC}}$ is a polynomial-time verifier.

Taking into account the verifier $\mathcal{V}_{\text{V2VC}}$, a witness $x$ for it would be a set of values attributed to the binary decision variables $X$, $Y$, and $Z$, as well as the slack variables, that satisfy the constraints of the V2VC instance (which is the set of matrices and vectors $A$, $b$, $l$, and $u$). In other words, a witness is a solution for a V2VC instance that is able to route all the vehicles to their destinations while also meeting the V2VC instance's path constraints, battery constraints, parking station charging constraints, and V2VC charging constraints.

Any solution of a V2VC instance is a set of values attributed to its binary decision variables $X$, $Y$, and $Z$, as well as the slack variables. By definition, the length of these variables is the same as the number of columns in the matrix $A$. Again, by definition, a witness is a solution of a V2VC instance. Therefore, a witness of a V2VC instance is polynomial in terms of the size of the instance ($A$, $b$, $l$, and $u$). We have shown that there exists a verifier that checks that a witness attests to a particular instance of the problem to be \textit{True} in time polynomial in the size of the problem instance; and that the size of the witnesses is polynomial in the size of the problem instance. Thus, the V2VC problem is proven to be in NP. \hfill $\square$

\subsubsection{Reduction from the 3SAT problem}

A problem is considered NP-hard if we can reduce all the problems in NP to it in polynomial time. To prove that the V2VC problem is NP-hard, it suffices to show a polynomial-time reduction to it from a known NP-hard problem. More concretely, we will show that the V2VC problem is NP-hard by performing a polynomial-time Karp reduction from the 3SAT problem~\cite{arora_computational_2009,Karp1972}.

\paragraph{3SAT Problem}

The 3SAT problem is a decision problem that is known to be NP-hard and is defined as follows:

\begin{definition}
    The 3SAT problem accepts as input a Boolean formula in \textbf{conjuctive normal form} such that there are at most three literals in each clause. There may be any number of atoms and clauses. The output is \textbf{True} if the formula is \textbf{satisfiable} and \textbf{False} otherwise.
\end{definition}

\noindent We note that a literal is one of two types in the context of the 3SAT: \textit{positive literal} when it is just the atom; and \textit{negative literal} when it is the negation of the atom. Moreover, a Boolean formula in the conjuctive normal form is a conjuction of one or more clauses, where each clause is a disjunction of literals. Finally, a Boolean formula is considered satisfiable when there is an assignment of values to the atoms that will satisfy it. A Boolean formula is considered to be satisfied by an assignment of values to its atoms when it is \textit{True} under that assignment. We introduce the Boolean formula:
\begin{equation}
    \label{eq:3sat}
    \left(x_1\right) \land \left(x_2 \lor x_3\right) \land \left(\lnot x_2 \lor \lnot x_3\right) \land \left(x_1 \lor \lnot x_2 \lor x_3\right).
\end{equation}

Going back to the context of our NP-completeness proof of the V2VC problem, we define a verifier for the 3SAT problem:
\begin{equation}
    \mathcal{V}_{\text{3SAT}}(p,x) = \begin{cases}
        \text{\textit{True} if $p$ is \textit{True} under $x$,}\\
        \text{\textit{False} otherwise,}
    \end{cases}
\end{equation}

\noindent where $p$ is the 3SAT instance, defined by its Boolean formula, and $x$ is a solution to the instance, defined as a list of ($\mathrm{atom}$ = \textit{truth value}) pairs, with one pair for each atom. Since the 3SAT problem is in NP, this verifier can be executed in a time polynomial to the number of atoms and clauses (which is easy to see how, given it only needs to compute a polynomial number of bitwise operations). Now that a polynomial-time verifier $\mathcal{V}_{\text{3SAT}}$ has been defined, we can define a witness $x$ of a 3SAT instance $p$ as a list of (atom, truth value) pairs such that $\mathcal{V}_{\text{3SAT}}(p,x) = \mathit{True}$. In other words, the witness $x$ of a 3SAT instance is an assignment of values to the atoms such that it satisfies the Boolean formula of the instance. As an example, $(x_1 = \mathit{True}, x_2 = \mathit{False}, x_3 = \mathit{True})$ is a witness for the 3SAT instance defined by the Boolean formula in \eqref{eq:3sat}.

\paragraph{Reduction Steps}
\label{subsub:redsteps}

We show next the reduction steps from the 3SAT problem to the V2VC problem. Figure~\ref{fig:visualaid} shows the application of such steps for the 3SAT instance defined by \eqref{eq:3sat}. Let us suppose that we are given an instance of 3SAT with $n$ atoms $x_1, \hdots, x_n$ and $m$ clauses $c_1,\hdots,c_m$. Then we construct the space network graph $G=(N,A)$ and the other parameters for the equivalent V2VC instance as follows:

\noindent $\bullet$ For each clause $c_j$, we create a node $\mathit{sat}_j \in M$, which is a meeting point ($M \subset N$), and a node $f_j \in N$, which are connected by a direct edge from $\mathit{sat}_{j}$ to $f_j$.

\noindent $\bullet$  For each atom $x_i$, we create a starting node $s_{i}\in N$. Next, we create a node $\mathit{true}_i~\in~M$ and a node $\mathit{false}_i~\in~M$, which are meeting points. Afterwards, we create directed edges from $s_{i}$ to $\mathit{true}_i$ and to $\mathit{false}_i$.

\noindent $\bullet$  For each clause $c_j$ and for each atom $x_i$, if $c_j$ has a literal of $x_i$, we create a directed edge from $\mathit{true}_i$ to $f_j$ and from $\mathit{false}_i$ to $f_j$; if $c_j$ has a positive literal of $x_i$, we create a directed edge from $\mathit{true}_i$ to $\mathit{sat}_j$; and if $c_j$ has a negative literal of $x_i$, we create a directed edge from $\mathit{false}_i$ to $\mathit{sat}_j$.

\noindent $\bullet$ For each atom $x_i$, we denote as $k_i$ the number of literals it has among all clauses. Next, we create $k_i$ EVs $v_{i,1}$ to $v_{i,k_i}$, all located on $s_{i}$. For each EV $v_{i,o},~o\in\left\{1,\hdots,k_i \right \}$, we set its final destination as $f_j$, where $c_j$ is the clause containing the $o^\text{th}$ literal of the atom $x_i$. $\text{SOC}_{v_{i,o}}$ is $1$ if $o\neq1$ and $3 k_i + 1$ if otherwise.

\noindent $\bullet$ For each clause $c_j$, we create an EV $v^{\mathit{sat}}_j$ located at $\mathit{sat}_j$ and whose final destination is $f_j$. $\text{SOC}_{v^{\mathit{sat}}_j}$ is $0$.

\noindent $\bullet$ For each EV $v$, we set $e_i=1$ (energy that EV $v$ provides to another EV in a time step) and $\text{MAXSOC}_v$ as a sufficiently large number, such as $3m+1$.

\noindent $\bullet$ All the arcs (or directed edges) created have a time duration of 1 and a traversal cost of 1.

\noindent $\bullet$ Finally, we define $T = 3K + 6$, where $K$ is the greatest number of literals any atom $x_i$ has among all clauses $\left(K=\max\left(k_1,\hdots,k_n\right)\right)$.

\noindent The construction steps listed above define all the parameters required for the input of a V2VC problem. Therefore, the graph $G$ we just constructed will be transformed into a time-space network graph $G_{TS}$, finalizing the setup of a V2VC instance from a 3SAT instance.

\paragraph{Proof of Correctness for the Reduction}

For the complete proof, which includes the proof of correctness for the reduction from the 3SAT to the V2VC problem, due to space constraints, we refer the reader to \href{https://cfpgomes.github.io/V2VCNP.pdf}{\underline{https://cfpgomes.github.io/V2VCNP.pdf}}. To summarize the reduction steps, the satisfiability of each clause is represented by an EV that needs to be charged to reach its destination. If there is an EV capable of charging this EV, then, equivalently, the clause is satisfied by the literal corresponding to the charging EV. If all the EVs reach their destination, the problem is \textit{True}. Figure~\ref{fig:visualaid} illustrates the reduction from the 3SAT instance derived from the Boolean formula in ~\eqref{eq:3sat}.

\subsubsection{Proof of NP-completeness}

It follows from the proof that the V2VC problem is in NP and is NP-hard that, by definition, the V2VC problem is NP-complete.

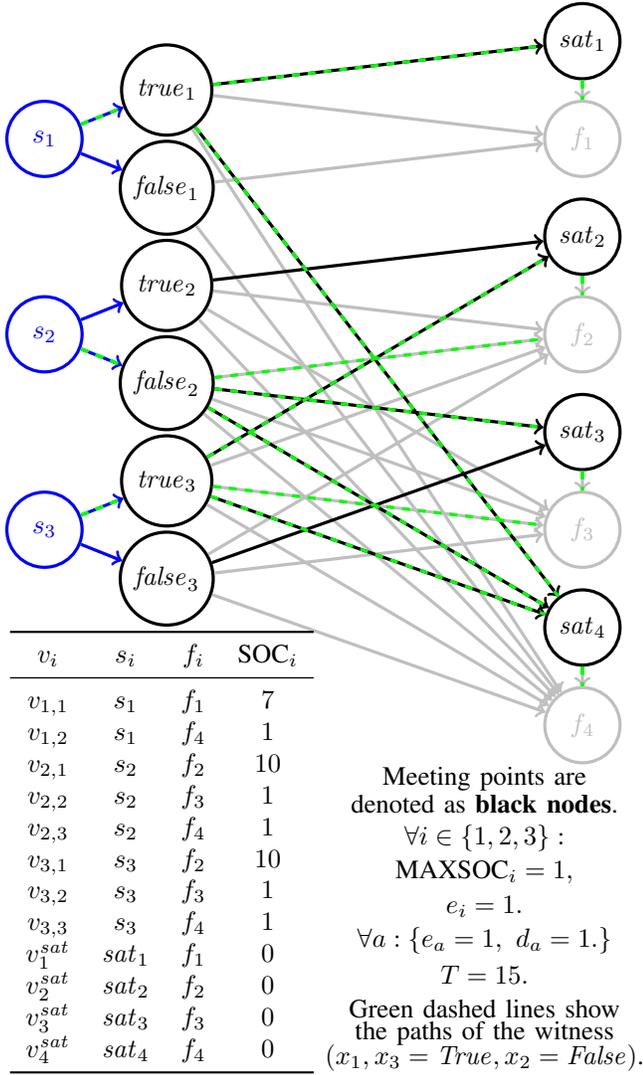
\begin{figure}
    \centering
    \begin{tikzpicture}[scale=0.65]
    % x1
    \node[minimum width=1cm,draw,circle,very thick,blue] (s1) at (-2,-2) {$s_1$};
    \node[minimum width=1.2cm,draw,circle,very thick] (true1) at ($(s1)+(2.5,1)$) {$\mathit{true}_1$};
    \node[minimum width=1.2cm,draw,circle,very thick] (false1) at ($(s1)+(2.5,-1)$) {$\mathit{false}_1$};
    \draw[->,very thick,blue] (s1) -- (true1);
    \draw[->,very thick,blue] (s1) -- (false1);

    % x2
    \node[minimum width=1cm,draw,circle,very thick,blue] (s2) at ($(s1) + (0,-4)$) {$s_2$};
    \node[minimum width=1.2cm,draw,circle,very thick] (true2) at ($(s2)+(2.5,1)$) {$\mathit{true}_2$};
    \node[minimum width=1.2cm,draw,circle,very thick] (false2) at ($(s2)+(2.5,-1)$) {$\mathit{false}_2$};
    \draw[->,very thick,blue] (s2) -- (true2);
    \draw[->,very thick,blue] (s2) -- (false2);

    % x3
    \node[minimum width=1cm,draw,circle,very thick,blue] (s3) at ($(s2) + (0,-4)$) {$s_3$};
    \node[minimum width=1.2cm,draw,circle,very thick] (true3) at ($(s3)+(2.5,1)$) {$\mathit{true}_3$};
    \node[minimum width=1.2cm,draw,circle,very thick] (false3) at ($(s3)+(2.5,-1)$) {$\mathit{false}_3$};
    \draw[->,very thick,blue] (s3) -- (true3);
    \draw[->,very thick,blue] (s3) -- (false3);

    % clause 1: x1
    \node[minimum width=1cm,draw,circle,very thick] (sat1) at (9,0) {$\mathit{sat}_1$};
    \node[minimum width=1cm,draw,circle,very thick,lightgray] (f1) at ($(sat1) + (0,-2)$) {$f_1$};
    
    \draw[->,very thick,lightgray] (sat1) -- (f1);
    
    \draw[->,very thick,lightgray] (true1) -- (f1);
    \draw[->,very thick,lightgray] (false1) -- (f1);

    % clause 2: x2 or x3
    \node[minimum width=1cm,draw,circle,very thick] (sat2) at ($(f1) + (0,-2)$) {$\mathit{sat}_2$};
    \node[minimum width=1cm,draw,circle,very thick,lightgray] (f2) at ($(sat2) + (0,-2)$) {$f_2$};
    
    \draw[->,very thick,lightgray] (sat2) -- (f2);
    
    \draw[->,very thick,lightgray] (true2) -- (f2);
    \draw[->,very thick,lightgray] (false2) -- (f2);
    \draw[->,very thick,lightgray] (true3) -- (f2);
    \draw[->,very thick,lightgray] (false3) -- (f2);

    % clause 3: not x2 or not x3
    \node[minimum width=1cm,draw,circle,very thick] (sat3) at ($(f2) + (0,-2)$) {$\mathit{sat}_3$};
    \node[minimum width=1cm,draw,circle,very thick,lightgray] (f3) at ($(sat3) + (0,-2)$) {$f_3$};

    \draw[->,very thick,lightgray] (sat3) -- (f3);
    
    \draw[->,very thick,lightgray] (true2) -- (f3);
    \draw[->,very thick,lightgray] (false2) -- (f3);
    \draw[->,very thick,lightgray] (true3) -- (f3);
    \draw[->,very thick,lightgray] (false3) -- (f3);

    % clause 4: x1 or not x2 or x3
    \node[minimum width=1cm,draw,circle,very thick] (sat4) at ($(f3) + (0,-2)$) {$\mathit{sat}_4$};
    \node[minimum width=1cm,draw,circle,very thick,lightgray] (f4) at ($(sat4) + (0,-2)$) {$f_4$};
    
    \draw[->,very thick,lightgray] (sat4) -- (f4);
    
    \draw[->,very thick,lightgray] (true1) -- (f4);
    \draw[->,very thick,lightgray] (false1) -- (f4);
    \draw[->,very thick,lightgray] (true2) -- (f4);
    \draw[->,very thick,lightgray] (false2) -- (f4);
    \draw[->,very thick,lightgray] (true3) -- (f4);
    \draw[->,very thick,lightgray] (false3) -- (f4);

    % Satisfiability arcs
    \draw[->,very thick] (true1) -- (sat1);
    \draw[->,very thick] (true2) -- (sat2);
    \draw[->,very thick] (true3) -- (sat2);
    \draw[->,very thick] (false2) -- (sat3);
    \draw[->,very thick] (false3) -- (sat3);
    \draw[->,very thick] (true1) -- (sat4);
    \draw[->,very thick] (false2) -- (sat4);
    \draw[->,very thick] (true3) -- (sat4);

    \node (tab) at ($(s2) + (2.4,-10.6)$) {\begin{tabular}{cccc}
        \toprule
        $v_i$ & $s_i$ & $f_i$ & $\text{SOC}_i$\\
        \midrule
         $v_{1,1}$ & $s_1$ & $f_1$ & $7$\\
         $v_{1,2}$ & $s_1$ & $f_4$ & $1$\\
         $v_{2,1}$ & $s_2$ & $f_2$ & $10$\\
         $v_{2,2}$ & $s_2$ & $f_3$ & $1$\\
         $v_{2,3}$ & $s_2$ & $f_4$ & $1$\\
         $v_{3,1}$ & $s_3$ & $f_2$ & $10$\\
         $v_{3,2}$ & $s_3$ & $f_3$ & $1$\\
         $v_{3,3}$ & $s_3$ & $f_4$ & $1$\\
         $v^{\mathit{sat}}_1$ & $\mathit{sat}_1$ & $f_1$ & $0$\\
         $v^{\mathit{sat}}_2$ & $\mathit{sat}_2$ & $f_2$ & $0$\\
         $v^{\mathit{sat}}_3$ & $\mathit{sat}_3$ & $f_3$ & $0$\\
         $v^{\mathit{sat}}_4$ & $\mathit{sat}_4$ & $f_4$ & $0$\\
         \bottomrule
    \end{tabular}};
    \node (h0) at ($(s2) + (9,-9.1)$) {Meeting points are};
    \node (h1) at ($(h0) + (0,-0.5)$) {denoted as \textbf{black nodes}.};

    \node (h2) at ($(h1) + (0,-0.7)$) {$\forall i \in \left\{1,2,3\right\}:$};
    \node (h3) at ($(h2) + (0,-0.7)$) {$\text{MAXSOC}_i = 1,$};
    \node (h4) at ($(h3) + (0,-0.7)$) {$e_i = 1.$};

    \node (h5) at ($(h4) + (0,-0.7)$) {$\forall a:\left\{e_a = 1,~d_a = 1.\right\}$};
    
    \node (h8) at ($(h5) + (0,-0.7)$) {$T = 15.$};
    
    \node (h9) at ($(h8) + (0,-0.7)$) {Green dashed lines show};
    \node (h10) at ($(h9) + (0,-0.5)$) {the paths of the witness};
    \node (h11) at ($(h10) + (0,-0.5)$) {$(x_1, x_3 = \mathit{True}, x_2 = \mathit{False})$.};

    % witness paths
    \draw[very thick,green,dashed] (s1) -- (true1) -- (sat1) -- (f1);
    \draw[very thick,green,dashed] (s1) -- (true1) -- (sat4) -- (f4);
    
    \draw[very thick,green,dashed] (s2) -- (false2) -- (f2);
    \draw[very thick,green,dashed] (s2) -- (false2) -- (sat3) -- (f3);
    \draw[very thick,green,dashed] (s2) -- (false2) -- (sat4) -- (f4);
    
    \draw[very thick,green,dashed] (s3) -- (true3) -- (sat2) -- (f2);
    \draw[very thick,green,dashed] (s3) -- (true3) -- (f3);
    \draw[very thick,green,dashed] (s3) -- (true3) -- (sat4) -- (f4);

    \draw[very thick,green,dashed] (sat1) -- (f1);
    \draw[very thick,green,dashed] (sat2) -- (f2);
    \draw[very thick,green,dashed] (sat3) -- (f3);
    \draw[very thick,green,dashed] (sat4) -- (f4);
    \end{tikzpicture}
    \caption{Illustration of a reduction from a 3SAT instance to a V2VC instance.}
    \label{fig:visualaid}
    \vspace{-0.64cm}
\end{figure}

\subsection{Restricted-V2VC Heuristic}

Motivated by the NP-Completeness of the V2VC model, we introduce additional restrictions and propose \textit{Restricted-V2VC (R-V2VC)}, a heuristic to approximately solve the V2VC model in polynomial time and space with respect to the problem size.

\subsubsection{Assumptions and Heuristic Description}

A set of assumptions were added to the V2VC model: each EV can have at most one V2VC action at a meeting point during the time horizon: (1) charge another EV or (2) be charged by another EV; and, without loss of generality, as will be seen later, parking stations and G2VC are not considered, as they do not contribute significantly to the complexity of the problem. Given these assumptions, R-V2VC has a simpler representation---a bipartite graph $G_{\mathrm{bipartite}}$, described next:
\vspace{-0.3cm}
\begin{algorithm}
% \caption{$G_{\mathrm{bipartite}}$ generator.}\label{alg:two}
\KwData{Parameters in Table \ref{tab:parameters}}
$N \gets V \cup \{f\}$ \verb\/* f is the destination *\/\;
$E \gets \varnothing$\;
\ForEach{$i \in N$ that can reach their destination}{
  $E \gets E \cup \{a_{i, f}\}$\;
}
\ForEach{$i,j \in V$, $m \in M$, s.t. $\exists a_{i,f}$, $\nexists a_{j,f}$, and both $i$ and $j$ can reach $f$ after $i$ charges $j$ at $m$}{
    $E \gets E \cup \{a_{i,j,m}\}$\;
}
\KwResult{$G_{\mathrm{bipartite}} \gets DirectedGraph(N, E)$}
\end{algorithm}
\vspace{-0.4cm}
\noindent After building $G_{\mathrm{bipartite}}$, we can select one action for each EV by selecting one and only one edge for each EV node. If an edge $a_{i,f}$ is selected, then the EV $i$ goes directly to the destination. If an edge $a_{i,j,m}$ is selected, then the EV $i$ does the action (1) charge the EV $j$ at the meeting point $m$; and the EV $j$ does the action (2) be charged by the EV $i$ at the meeting point $m$. Therefore, we solve R-V2VC by finding the best edge for each EV node in this $G_{\mathrm{bipartite}}$ representation such it minimizes a given objective function $\mathcal{F}^\mathrm{R}$. Note that R\nobreakdash-V2VC is compatible with G2VC: we can add edges between needy EVs and nodes that represent parking stations in $G_{\mathrm{bipartite}}$.\begin{theorem}
    Under the assumption that $\mathcal{F}^\mathrm{R}$ is a separable convex objective function, R-V2VC can be solved in time polynomial to the size of the problem.
\end{theorem}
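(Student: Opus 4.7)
My plan is to encode R-V2VC as a bipartite assignment/min-cost-flow problem and exploit total unimodularity together with known polynomial-time algorithms for separable convex optimization over integer polytopes.

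First, I would make the bipartite structure of $G_{\mathrm{bipartite}}$ explicit. By construction, the EVs split naturally into two classes: $V_1$, the EVs that can reach $f$ on their own, and $V_2$, the EVs that cannot. Every edge of $G_{\mathrm{bipartite}}$ either has the form $a_{i,f}$ with $i\in V_1$, or the form $a_{i,j,m}$ with $i\in V_1$ and $j\in V_2$. Introducing a binary indicator $x_e$ for each edge $e$, the R-V2VC problem becomes
\begin{equation*}
    \min\ \mathcal{F}^{\mathrm{R}}(\boldsymbol{x}) \quad\text{s.t.}\quad \sum_{e\ni i} x_e = 1\ \forall i\in V_1,\ \sum_{e\ni j} x_e = 1\ \forall j\in V_2,\ x_e\in\{0,1\},
\end{equation*}
plus (if G2VC edges are included) a capacity constraint on the parking-station nodes. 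The key structural observation is that each column of this constraint matrix has at most two non-zero entries (both equal to $1$), one indexed by a node in $V_1$ and the other, if present, by a node in $V_2\cup\{f\}\cup P$. Hence the constraint matrix is the incidence matrix of a bipartite graph and is therefore totally unimodular.

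Next, I would use total unimodularity to pass from the IP to a polynomially solvable continuous problem. Because $A$ is TU and the right-hand side is integral, the polyhedron $\mathcal{P}=\{x:Ax=b,\ 0\le x\le 1\}$ is integral: every vertex is a $0/1$ vector corresponding to a feasible R-V2VC action profile. Thus the integrality constraints can be dropped, and R-V2VC is equivalent to minimising the separable convex function $\mathcal{F}^{\mathrm{R}}$ over the integer polytope $\mathcal{P}$.

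The remaining step is to solve separable convex minimisation over an integer polytope defined by a TU system in polynomial time. Here I would invoke the standard reduction from separable convex min-cost flow to linear min-cost flow (as in Hochbaum's framework): for each edge variable $x_e$, replace it by two 0/1 copies whose linear costs are the successive differences of $\mathcal{F}^{\mathrm{R}}$ at $0$ and $1$ (on binaries this is a single linear term per edge, but the argument extends cleanly if multiple parallel edges or G2VC capacity arcs broaden the domain). The resulting problem is a bipartite min-cost assignment / min-cost flow instance with source feeding $V_1$, edges carrying the $a_{i,f}$ and $a_{i,j,m}$ arcs, and sinks at $V_2$ and at $f$, which is solvable by the Successive Shortest Path or Orlin's algorithm in strongly polynomial time.

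Finally, I would verify polynomial size: $|V_1|\le|V|$, $|V_2|\le|V|$, and the number of charging edges is at most $|V|(|V|-1)|M|$, so the flow network has $O(|V|^2|M|)$ arcs and $O(|V|)$ nodes, polynomial in the input. Combining this with the polynomial-time flow solver gives the claimed polynomial-time algorithm. The main obstacle I anticipate is treating $\mathcal{F}^{\mathrm{R}}$ carefully enough: the separable convex assumption is what lets the per-edge piecewise-linearisation stay short and keeps the reduction polynomial rather than merely finite, so I would make that dependence explicit when quoting the convex min-cost flow result.
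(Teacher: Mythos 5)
Your proposal is correct and follows essentially the same route as the paper: model edge selection as $A\boldsymbol{x}=\boldsymbol{1}$ over the bipartite incidence matrix (helpers on one side, needy EVs and $f$ on the other), observe total unimodularity, and invoke polynomial-time solvability of separable convex minimization over a TU system. The only difference is that you unpack the final step into an explicit piecewise-linearization/min-cost-flow argument, whereas the paper simply cites the known result.
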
\begin{proof} The problem of finding the best edge for each EV node in $G_{\mathrm{bipartite}} = (V\cup f,E)$ can be represented by finding $\boldsymbol{x}$ such that it minimizes $\mathcal{F}^\mathrm{R}\left( \boldsymbol{x} \right)$, given $A_{\mathrm{bipartite}}\boldsymbol{x}=\boldsymbol{1}$, where $\boldsymbol{1}$ is the $\left| V \right|$-dimensional column vector of ones and $A_{\mathrm{bipartite}}$ is the incidence matrix of $G_{\mathrm{bipartite}}$ minus the row of node $f$. Hence, $\boldsymbol{x}$ is the binary vector of size $\left| E \right|$ such that, for any edge $i$, $x_i=1$  means that the edge $i$ is selected and $x_i=0$ means otherwise. A consequence from the derivation of $A_{\mathrm{bipartite}}$ from a bipartite graph is that it is a totally unimodular matrix. Therefore, for certain classes of $\mathcal{F}^\mathrm{R}$ (including separable convex objective functions), R-V2VC is optimally solved in time polynomial to its size~\cite{4567799,unimodularseparableconvex}.
\end{proof}

\section{Experimental Results and Discussion}\label{sec:results}

In this section, the original formulation is compared with R-V2VC in terms of their solution time, solution quality, and number of variables. Scenarios of increasing size were devised, as listed in Table~\ref{tab:scenarios}. The first set of scenarios, B1--B11, is used to benchmark the formulations according to the number of variables and the solution time. The other set of scenarios, Q1--Q6, is used to compare the quality of the solutions returned by both formulations. All scenarios use an objective function that measures the amount of energy spent within the time windows. We use Gurobi\footnote{Solver specification: \texttt{version 11.0.0 build v11.0.0rc2}. OS and CPU specifications: \texttt{(linux64 - "Ubuntu 22.04.2 LTS") 12th Gen Intel(R) Core(TM) i7-12700H}.} to solve all scenarios with both formulations.

\begin{table}
    \centering
    \caption{Configurations of the different problem scenarios.}
    \begin{tabular}{c|c|c|c|c|c|r|r}
        \toprule
        ID & \multicolumn{2}{|c|}{No. of EVs} & \multicolumn{2}{|c|}{No. of} & $T$ & \multicolumn{2}{|c}{No. of Variables} \\
         & Helper & Needy & \multicolumn{2}{|c|}{Nodes} & & \multicolumn{1}{|c|}{Original} & \multicolumn{1}{|c}{R-V2VC}\\
        \midrule
        B1& 1& 1 & \multicolumn{2}{|c|}{20}& 40 & 13 886& 2\\
        B2& 2& 1 & \multicolumn{2}{|c|}{20}& 40 & 29 019& 3 \\
        B3& 2& 2 & \multicolumn{2}{|c|}{20}& 40 & 49 612& 4 \\
        B4& 4& 2 & \multicolumn{2}{|c|}{20}& 40 & 107 178 & 10 \\
        B5& 6& 3 & \multicolumn{2}{|c|}{20}& 40 & 234 477 & 21 \\
        B6& 8& 4 & \multicolumn{2}{|c|}{20}& 40 & 410 916 & 36 \\
        B7& 10 & 5 & \multicolumn{2}{|c|}{20}& 40 & 636 495 & 48 \\
        B8& 20 & 10& \multicolumn{2}{|c|}{40}& 80 & 10 117 230& 201 \\
        B9& 40 & 20& \multicolumn{2}{|c|}{80}& 160& 161 501 820& 788 \\
        B10 & 60 & 30& \multicolumn{2}{|c|}{120} & 240& 817 616 430& 1 787 \\
        B11 & 80 & 40& \multicolumn{2}{|c|}{160} & 320& 2 583 708 600& 2 234 \\
        Q1 & 1 & 1 & \multicolumn{2}{|c|}{2} & 10 & 252 & 2 \\
        Q2 & 2 & 1 & \multicolumn{2}{|c|}{3} & 10 & 891 & 4 \\
        Q3 & 3 & 2 & \multicolumn{2}{|c|}{5} & 10 & 4 230 & 9 \\
        Q4 & 4 & 2 & \multicolumn{2}{|c|}{6} & 10 & 7 428 & 12\\
        Q5 & 5 & 3 & \multicolumn{2}{|c|}{8} & 10 & 17 064 & 20\\
        Q6 & 6 & 3 & \multicolumn{2}{|c|}{9} & 10 & 24 300 & 24\\
        \bottomrule
    \end{tabular}
    \label{tab:scenarios}
\end{table}

\subsection{Number of variables}

Figure~\ref{fig:numvars} shows the number of variables of both the original formulation and the proposed heuristic with respect to the number of EVs in each scenario. The original formulation explodes in the number of variables as we increase the number of EVs and nodes, as well as the time horizon. Due to this explosion, Gurobi runs into memory limitations when trying to solve scenarios of practical size with the original formulation. In contrast, R-V2VC presents a more proportional and near-linear growth in the number of variables. 

\begin{figure}
    \centering
    \includegraphics[width=\columnwidth]{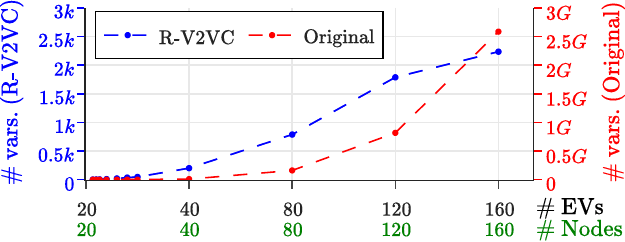}
    \caption{Number of variables used in scenarios B1--B11. Note that the units are different: the blue axis ranges in the thousands ($10^3$), while the red axis ranges in the billions ($10^9$), as shown by the SI prefixes $k$ and $G$, respectively.}
    \label{fig:numvars}
\end{figure}

\subsection{Solution Time}

When trying to solve the scenarios B1--B11 with the original formulation using Gurobi, we are only able to solve for the scenario~B1, with an average solution time of 292ms in 1~000~runs. Gurobi cannot solve the remaining scenarios with the original formulation because it runs out of memory. Figure~\ref{fig:solving_duration} shows the plot of the least squares logarithmic regression of the solution time against the number of variables in the scenario when executing the R-V2VC heuristic with Gurobi. For this plot, we generated 180 random scenarios ranging from 15 to 120~EVs (39 to 3~280~variables). As expected with R\nobreakdash-V2VC, the solution time grows in a sublinear trend compared to the number of variables in the scenario.

\begin{figure}
    \centering
    \includegraphics[width=\columnwidth]{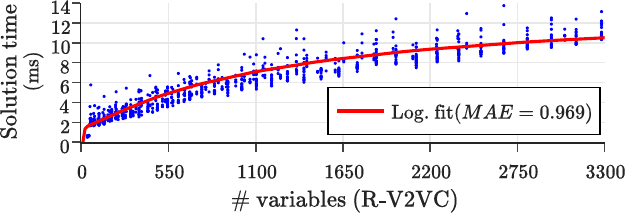}
    \caption{Least Squares Logarithmic Regression of the solution time (ms) against the number of variables in randomly generated scenarios with R-V2VC. The regression is fit with a degree of 4, after which the three first decimal digits of the Mean Squared Error (MSE) do not change. The Mean Absolute Error (MAE) of the fit is $0.969$. A total of 900 data points were computed (36 scenarios times 5 random seeds times 5 runs).}
    \label{fig:solving_duration}
\end{figure}

\subsection{Comparing the Quality of the Solutions}

To gain a more comprehensive understanding of how good the R-V2VC's solutions are, we created another set of small scenarios, Q1--Q6, that allow us to evaluate their proximity to the optimal solutions with respect to the objective function value. Figure~\ref{fig:quality} shows the objective function value of the R-V2VC solutions and compares them with the minimum and maximum objective function values achieved with the original formulation using Gurobi. For each of these scenarios, we observe that R-V2VC gets an optimal or near-optimal solution. Moreover, R-V2VC is orders of magnitude faster in finding solutions compared to the original formulation (e.g. below 2ms versus around 430ms in scenario Q5).

\begin{figure}
    \centering
    \includegraphics[width=\columnwidth]{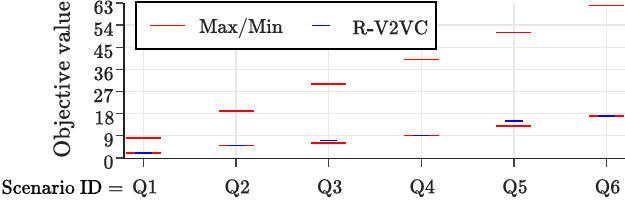}
    \caption{Plot of the objective function value of each of the R-V2VC solutions for the scenarios Q1--Q6 against the minimum and maximum objective function values obtained with the original formulation.}
    \label{fig:quality}
\end{figure}

\subsection{Limitations of R-V2VC and Illustration of a Scenario}

A consequence of the additional assumptions introduced for R-V2VC is that the heuristic cannot solve scenarios where at least one EV needs to perform two or more V2VC actions so that all EVs can reach their destinations during the time horizon. Figure~\ref{fig:unsolvedtimespace} illustrates in space and time a small scenario with three EVs (A, B, and C). For this scenario, Gurobi can find feasible solutions using the original V2VC model. Concretely, the optimal solution found by Gurobi assigns two V2VC actions to EV~B: (1) receive energy from EV~A, and (2) supply energy to EV~C. In contrast, R\nobreakdash-V2VC is infeasible, as there are no feasible solutions that meet R\nobreakdash-V2VC's assumptions. To aid the reader, Figure~\ref{fig:soc} plots the changes in SOC levels (from Table~\ref{tab:parameters}) of the EVs in the same scenario.

\begin{figure}
    \centering
    \includegraphics[width=\columnwidth]{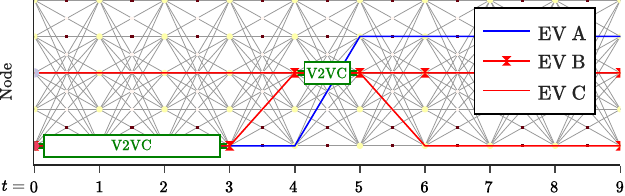}
    \caption{Illustration of a solution for a scenario using the original formulation, in its equivalent $G_{\mathrm{TS}}$. The needy EV B is charged by the helper EV A at $t=0$ for 3 time steps and then charges the other needy EV C at $t=4$.}
    \label{fig:unsolvedtimespace}
\end{figure}

\begin{figure}
    \centering
    \includegraphics[width=\columnwidth]{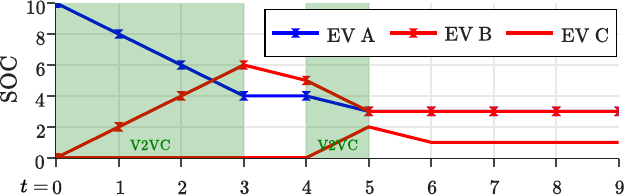}
    \caption{State of Charge (SOC) level changes for each EV in the solution for the small scenario shown in Figure~\ref{fig:unsolvedtimespace}.}
    \label{fig:soc}
\end{figure}

% \subsection{Synthesis of Findings}

% First, our findings validate the NP-completeness of the V2VC model, since the original formulation explodes in the number of variables as we increase the number of EVs and nodes, making Gurobi unable to solve all but one scenario due to running out of memory. Moreover, our findings demonstrate the advantage of R-V2VC's totally unimodular matrix, as the heuristic is capable of solving large-scale scenarios in polynomial time with respect to the scenario size. Next, our findings from scenarios Q1--Q6 also show that the heuristic finds optimal or near-optimal solutions. Last, we demonstrate the limitations of R-V2VC by showing that it cannot solve scenarios where some EV needs to perform more than one action so that all EVs can reach their destination.

\section{Conclusion}\label{sec:conclusion}

In this paper, we proposed a novel Vehicle-to-Vehicle Charging model that allows decision-makers to look at VRPs with V2VC as optimization problems, where the objective function is set according to their specific objectives. This model was shown to be NP-Complete, which motivated our proposed heuristic, Restricted-V2VC (R-V2VC), which efficiently solves the V2VC model for problems of realistic size.

Our findings validate the NP-completeness of the V2VC model, since the original formulation explodes in the number of variables as we increase the number of EVs and nodes. This explosion makes software packages like Gurobi unable to solve any practical problem due to memory constraints. Our findings also confirm a sublinear growth in the solution time of R-V2VC as we increase the problem size. Moreover, we have shown that R-V2VC's finds optimal or near-optimal solutions in scenarios where feasible solutions that meet R-V2VC's assumptions exist, while having a solution time that is orders of magnitude lower. However, R-V2VC's assumptions make it unable to solve scenarios that need one EV to perform more than one V2VC action during the time horizon.

Two significant paths for further work were identified: (1) our V2VC approach could be expanded to accommodate a rolling time horizon, in which EVs can enter and leave at any moment; and (2) R-V2VC's limitation can be mitigated by creating a $k$-indexed hierarchy of heuristics that can have at most $k$ charging actions, with R-V2VC being $k=1$. Regarding the exact solver of the V2VC model, a possible research direction would be to improve it with special methods that avoid out-of-memory issues, such as column-generation~\cite{fischer_dynamic_2014,a14010021}. Ultimately, we believe that our work can be used to research what-if scenarios that assess the effect of V2VC on fleet operations, as well as on the electricity grid, especially in areas that are shifting rapidly to EV mobility.

\bibliographystyle{IEEEtran}
\bibliography{references.bib}

\end{document}